\theoremstyle{plain}
\theoremstyle{plain}
\newtheorem{claim}{Claim}[section]
\theoremstyle{definition}
\theoremstyle{remark}
\title{StructureBoost: Efficient Gradient Boosting for Structured Categorical Variables}
\author{%
  Brian Lucena \\
  Numeristical\\
  \texttt{brian@numeristical.com} \\
}
\begin{document}

\maketitle

\begin{abstract}
Gradient boosting methods based on Structured Categorical Decision Trees (SCDT) have been demonstrated to outperform numerical and one-hot-encodings on problems where the categorical variable has a known underlying structure.  However, the enumeration procedure in the SCDT is infeasible except for categorical variables with low or moderate cardinality.  We propose and implement two methods to overcome the computational obstacles and efficiently perform Gradient Boosting on complex structured categorical variables.  The resulting package, called StructureBoost, is shown to outperform established packages such as CatBoost and LightGBM on problems with categorical predictors that contain sophisticated structure.  Moreover, we demonstrate that StructureBoost can make accurate predictions on unseen categorical values due to its knowledge of the underlying structure.
\end{abstract}

\section{Introduction and Background}
In supervised learning, it is often desirable to use a categorical variable as a predictor.  Traditionally, this is handled by the use of ``dummy" variables, also known as {\em one-hot encoding}, where a binary variable is created for each possible value of the categorical variable.  This approach effectively assumes that the different values of the categorical variable have no relationships of interest -- they are different, incomparable, and nothing more.  It also has the drawback of creating a large number of variables out of a single entity.

Due to the messiness of one-hot encoding, numerical encoding has emerged as an alternative.  In this scheme, each value of the categorical variable is mapped to a number, and the result is treated as a single numerical variable.  This is frequently a large improvement over one-hot encoding. Specifically, it implies an {\em ordinal} structure on the values of the categorical variable.  For example, if the make of a car is used as a categorical variable, it is useful to impose a ``ranking" that puts more expensive models on one end and cheaper models at the other end.  A simple numerical encoding technique is to use the average value of the target variable among all instances where the categorical variable attains a particular value.  This approach, or more sophisticated variants designed to reduce overfitting, lies beneath gradient boosting packages such as LightGBM~\cite{ke2017lightgbm} and CatBoost~\cite{NIPS2018_7898}~\cite{dorogush2018catboost}.

However, as illustrated in~\cite{categstruct}, a linear ranking may not effectively capture the underlying structure of the categorical variable.  The months of a year have a circular structure, while the US States have a geographical structure.  Image classification categories and diseases have a hierarchical structure.  Supervised learning methods should exploit this structure to make better predictions.

Gradient boosting~\cite{Friedman:2002:SGB:635939.635941}~\cite{Friedman00greedyfunction} has proven to be one of the most effective methods for supervised learning on tabular data and is widely used for predictive models in industry.  The development of XGBoost~\cite{chen2016xgboost} was a huge step forward, making advances in its mathematical formulation, efficiency, and smart handling of missing data.  In the last several years, packages such as CatBoost~\cite{NIPS2018_7898}~\cite{dorogush2018catboost}  and LightGBM~\cite{ke2017lightgbm} have implemented further techniques to increase efficiency and handle categorical variables in a more direct and sophisticated matter.

More recently,~\cite{categstruct} introduced the notion of a Structured Categorical Decision Tree (SCDT).which generalizes the standard decision tree framework to incorporate the structure of a categorical variable, encoded by a {\em terrain}.  The terrain of a categorical variable defines what values it would make sense to average over (or aggregate together), and thereby defines structure over the different values.  The decision tree then searches over {\em maximally coarse} splits which respect the terrain.  While the precise definition of a terrain can be tedious, it was also shown that a graph (where the vertices represent the values of the variable and edges represent ``neighboring" values) can concisely define a terrain.  In this representation, the connected sets of the graph represent these ``averageable" sets.  Note that this is different than the use of graphs in probabilistic graphical models.  The structure is defined within the values of a single variable, not to imply dependencies across variables.

The SCDT algorithm for graphical terrains requires the enumeration of all connected sets in the graph such that the complement is also a connected set -- these comprise the ``allowable" splits with respect to the terrain. However, this task is computationally intensive, even on moderately-sized graphs and so the algorithm is demonstrated only on graphs of 9 and 20 vertices.  But an interesting detail of those results demonstrates a potential path to efficiency.  The authors note that it was not necessary to evaluate every split to get good performance.  In fact, performance was improved by searching only a small subset of the allowable splits.  If the boosting procedure performs as well or better when only considering a small fraction of splits, then it may not be necessary to enumerate the full universe of allowable splits.  Rather, we need only generate a random sample of them.

In this paper, we propose two different methods for choosing a random sample of allowable splits without fully enumerating the entire space.  We implement these methods in a Gradient Boosting package called StructureBoost and demonstrate that they outperform the fully enumerated approaches in both time and predictive performance.  Furthermore, we show that StructureBoost can outperform state-of-the-art packages such as CatBoost and LightGBM on problems where the categorical predictors contain sophisticated structure that is not ordinal in nature.  Finally, we show that StructureBoost can make better predictions on unseen categorical values due to its knowledge of their structure.

The rest of the paper is organized as follows:  In Section~\ref{categstructrev} we review the notion of categorical structure introduced in~\cite{categstruct}.  In Section~\ref{splitsample} we propose two methods for sampling from the space of ``allowable" partitions without fully enumerating all of them.  Section~\ref{experiments} presents the experimental framework and results.  We then offer concluding remarks and a discussion of broader impact.

\section{Review of Categorical Structure}
\label{categstructrev}

To simplify the presentation, we restrict our attention to categorical variables with graph-based terrains.  That is, each categorical predictor has an associated graph where the vertices are the possible states of the random variable, and edges represent ``neighboring" values.  A standard ``ordinal" categorical variable would therefore be represented by a chain graph.  The months of the year would be best represented by a simple cycle, and the lower 48 US States (plus the District of Columbia) would be represented by the US-49 graph in Figure~\ref{fig:US-States}.

The central idea in the SCDT is to evaluate the {\em maximally coarse} partitions which respect the terrain of the variable.  For graph-based terrains, these ``allowable" splits are of the form $x \in S$ vs $x \in S^C$ (the complement of $S$) for all sets $S$ such that both $S$ and $S^C$ are connected sets in $G$.  For example, in the US-49 graph, the split $\{S_1, S_1^C\}$ where $S_1 = \{CT, MA, RI, NH, VT, ME\}$ would be allowable since $S_1$ and $S_1^C$ are each connected.  However, the split $\{S_2, S_2^C\}$ where $S_2 = \{NY, NJ, PA\}$ is not allowable since $S_2^C$ is not a connected set in $G$.

Simply enumerating, let alone evaluating, the allowable splits becomes computationally burdensome for graphs such as the US-49 graph.  In fact, there are 4,149,721 different allowable splits for that graph.~\cite{categstruct} Moreover, enumerating those splits required enumerating 35,327,031 connected sets of size less than or equal to 24, and then checking to see if the complements of those sets were themselves connected.  The question of how to enumerate the connected sets in a graph most efficiently is itself a subject of active research in graph theory (see e.g.~\cite{komusiewicz2019enumerating}~\cite{elbassioni2015polynomial}). While there may be clever methods to reduce this computation, merely enumerating 4 million splits is computationally burdensome.  Therefore, we propose methods to choose a random sample of allowable splits without enumerating all of them.

\begin{figure}[b]
\begin{subfigure}{.33\textwidth}
  \centering
  \includegraphics[width=.95\linewidth]{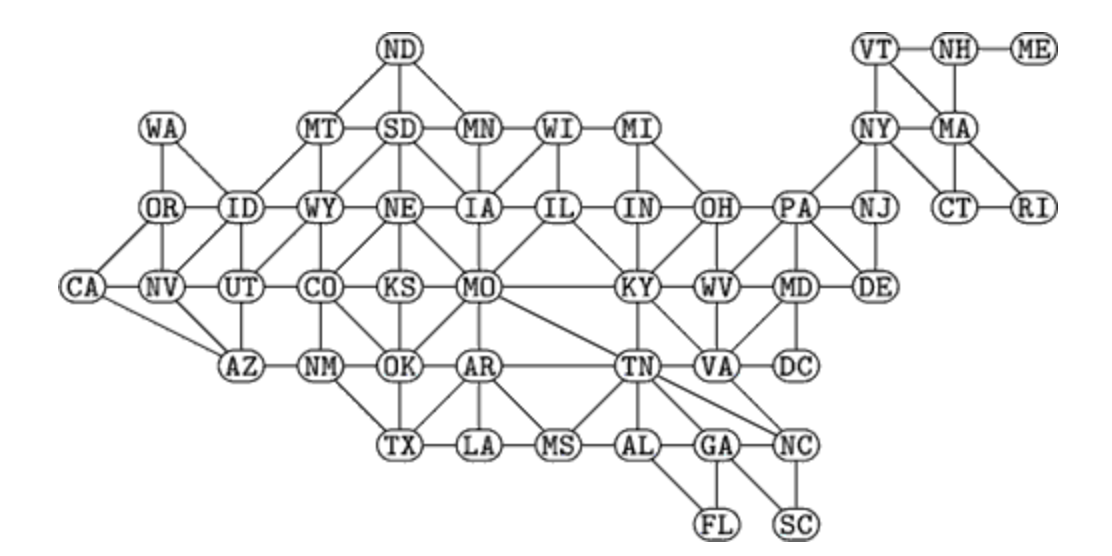}  
  \caption{US-49 graph~\cite{USgraph}}
  \label{fig:US-States}
\end{subfigure}
\begin{subfigure}{.33\textwidth}
  \centering
  \includegraphics[width=.95\linewidth]{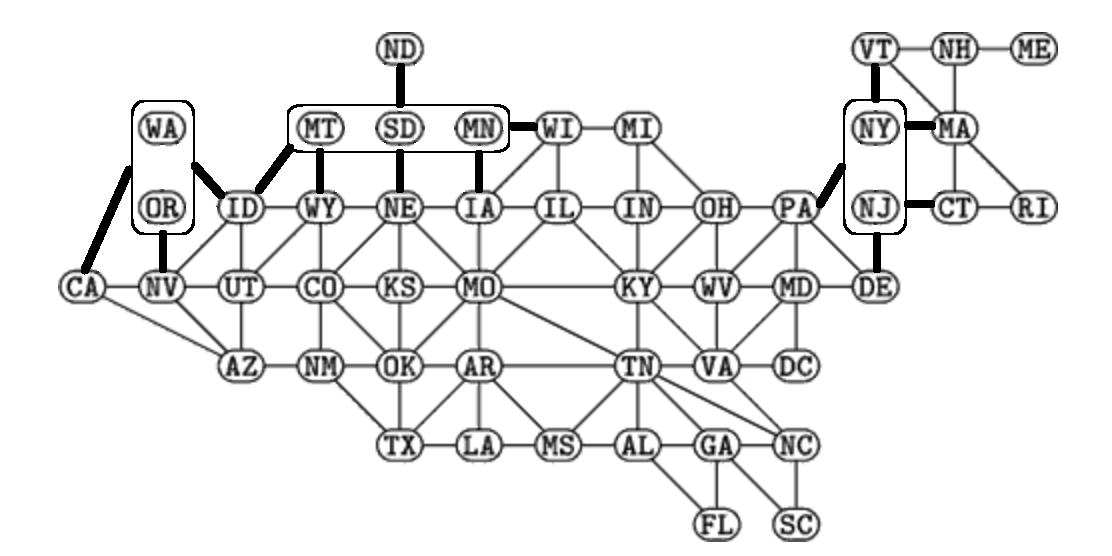}  
  \caption{After contracting 4 edges}
  \label{fig:US-States-4c}
\end{subfigure}
\begin{subfigure}{.33\textwidth}
  \centering
  \includegraphics[width=.95\linewidth]{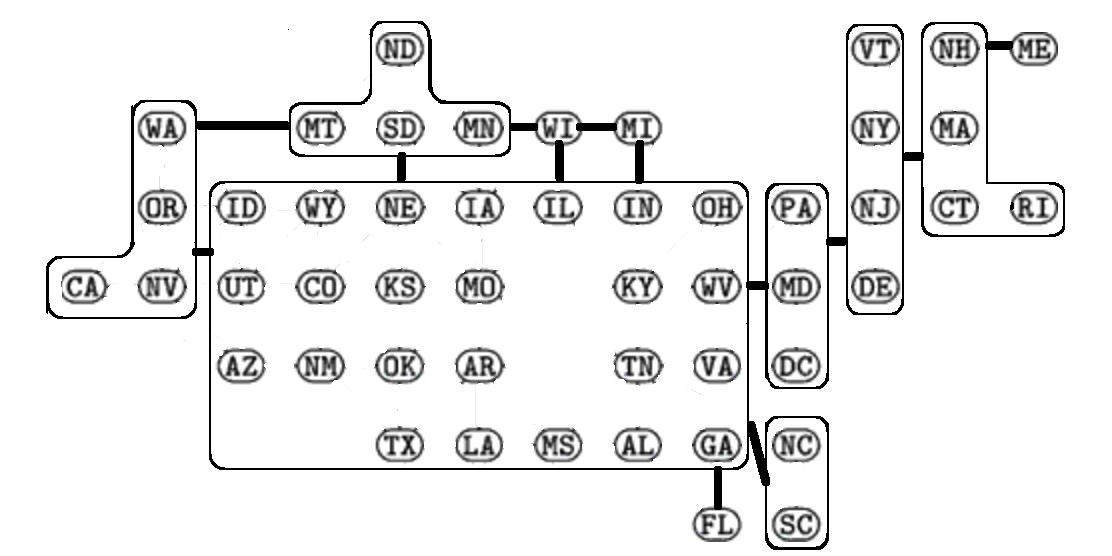}  
  \caption{Contracted to 11 vertices}
  \label{fig:US-States-contr}
\end{subfigure}
\caption{Adjacency graphs before and after edge contraction}
\label{fig:fig}
\end{figure}

\section{Efficient split sampling}
\label{splitsample}
In this section, we discuss two methods for generating random samples of allowable splits, which use techniques and concepts from graph theory.  Both methods share the following properties:

\begin{enumerate}
\item{Every allowable split has positive probability of being sampled.}
\item{Some allowable splits may be more likely than others to be sampled - we generally do not choose each allowable split with equal probability}
\item{The methods contain nuisance parameters that can ``tune" between a computationally costly extreme that approaches (or achieves) full enumeration, and very quick alternative that chooses a very small number of splits.}
\end{enumerate}

\subsection{Edge Contraction}
The first method we will present is based on edge contraction.  Edge contraction is deeply connected with the theory of graph minors developed by Robertson and Seymour (e.g.~\cite{robertson1986graph}~\cite{robertson1990graph}).  Contracting an edge means that we identify both endpoints of an edge as a single vertex and remove any duplicate edges.  This reduces the size of the graph, and can be thought of as a ``coarser" representation of the same information.  For example, in the US-49 graph of the lower 48 US states plus DC, contracting the edge between Washington and Oregon is equivalent to treating Washington and Oregon as a single state.   Every time we contract an edge in a graph, we reduce number of vertices by one, and the number of edges by at least one (and typically more).  Figure~\ref{fig:US-States-4c} shows the result after contracting the edges $\{WA,OR\},\{MT,SD\},\{SD,MN\}$, and $\{NY,NJ\}$.  By sequentially performing edge contraction 38 times starting from the US-49 graph, we would have a graph on 11 vertices, where each vertex represents a conglomeration of one or more states, such as in Figure~\ref{fig:US-States-contr}.

Given this reduced 11-vertex graph, it is now computationally inexpensive to enumerate all allowable splits.  In general, the number of allowable splits of a $k$-vertex graph is bounded above by $2^{k}$.  However, the actual number will typically be much lower for relatively sparse graphs (such as planar graphs).  For the graph shown in Figure~\ref{fig:US-States-contr}, there are only 16 allowable splits. It is easy to see how an allowable split on the contracted graph corresponds to an allowable split on the original graph, by identifying the conglomerated vertices with the set of original vertices that were merged.  

We present the details below in Algorithm~\ref{alg1}.  This method has two parameters: the size to which we contract the graph ({\em contraction\_size} or {\em c}), and the number of splits to search after we exhaustively enumerate the allowable splits of the contracted graph ({\em max\_splits\_to\_search} or {\em m}).  

\begin{algorithm}
\caption{Split Sampling via Edge Contraction}\label{alg1}
\begin{algorithmic}[0]
\State {\bf Input:} A categorical variable $X$ with an associated graph $G_0 = (V_{G_0},E_{G_0})$).
\State {\bf Parameters:} $m, c$
\State {\bf Output:} A set $\mathcal{S}$ of allowable splits such that $|\mathcal{S}| \leq m$
\State $r \Leftarrow |V_{G_0}| - c$ 
\For{$i$ in $1,2,\ldots, r$:} \Comment{Contract the graph down to size $c$}

\State Choose an edge $e$ at random from $E_{G_{i-1}}$ 
\State $G_i \Leftarrow $ the graph resulting from contracting edge $e$ in graph $G_{i-1}$ 
\EndFor

\State Enumerate all allowable splits on $G_r$, call the resulting set of splits $S^{\prime}$.
\State Let $S_0$ be the set of splits on $V_0$ corresponding to $S^{\prime}$ (by identifying  conglomerated vertices in $G_r$  \phantom{x}\hspace{70 mm} with the corresponding set of vertices in $G_0$)

\State  $S \Leftarrow$ a random subset of $S_0$ of size $\min(m, |S_0|)$.

\State Return $S$
\end{algorithmic}
\end{algorithm}

Running this algorithm for a graph $G=(V,E)$ requires $(|V| - c)$ edge contractions, each of which takes $O(|E|)$ operations to execute in a naive implementation.  This is then followed by the full enumeration of allowable splits on the contracted graph, which is generally exponential in the size of that smaller graph.  By choosing a relatively small $c$ value, we ensure that the exponential step is run on a small graph, rendering it feasible.  Note that if $c$ is the size of the original graph and $m$ is large enough, then Algorithm~\ref{alg1} reduces to full exhaustive enumeration.   

\begin{claim}
If $s = (V_a, V_b)$ is an allowable split in $G$, then with positive probability (w.p.p.), Algorithm~\ref{alg1} will output a set $S$ such that $s \in S$.
\end{claim}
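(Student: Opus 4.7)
The overall plan is to decompose the event $\{s \in S\}$ into two simpler events: first that the lifted split corresponding to $s$ lies in $S_0$ (i.e., survives the $r$ edge contractions and appears in the enumeration on $G_r$), and second that it is chosen in the final uniform subsample of size $\min(m, |S_0|)$. It suffices to show each event has positive probability. The key structural observation is that $s$ lifts to an allowable split of $G_r$ precisely when none of the $r$ contracted edges crosses the $V_a$--$V_b$ cut in $G$; contracting only edges internal to $V_a$ or internal to $V_b$ keeps the two sides from merging and preserves their connectivity in $G_r$.

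To carry out the first step, I would exhibit a target set of $r$ contractions drawn entirely from the ``internal'' edges. Since $V_a$ and $V_b$ are each connected in $G$, they admit spanning trees $T_a$ and $T_b$ with $|V_a|-1$ and $|V_b|-1$ edges respectively. Assuming $c \ge 2$, we have $r = |V_{G_0}|-c \le |V|-2 = |T_a| + |T_b|$, so we may fix an explicit ordered target sequence $(e_1, \ldots, e_r)$ of distinct edges taken from $T_a \cup T_b$, with at most $|V_a|-1$ from $T_a$ and at most $|V_b|-1$ from $T_b$. After contracting this sequence, each conglomerated vertex of $G_r$ lies entirely within $V_a$ or entirely within $V_b$, the two induced sides are connected in $G_r$ (contraction preserves connectivity), and the resulting 2-coloring of $V_{G_r}$ is an allowable split of $G_r$ whose lift to $V_{G_0}$ is exactly $s$. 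I would then lower bound the probability that the algorithm produces this specific sequence: at step $i$, conditional on matching the target sequence through step $i-1$, the edge $e_i$ still exists in $E_{G_{i-1}}$ (it has not been deleted, since it belonged to an untouched spanning-tree branch), so it is picked with probability $1/|E_{G_{i-1}}| > 0$. Multiplying these $r$ positive probabilities gives a strictly positive lower bound on $\Pr[s \in S_0]$. Finally, since $S_0$ is a finite set and the algorithm chooses a uniform random subset of size $\min(m, |S_0|)$, we have $\Pr[s \in S \mid s \in S_0] = \min(m,|S_0|)/|S_0| > 0$ whenever $m \ge 1$, so the product is positive.

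The main obstacle, I expect, is not any single computation but rather making the bookkeeping in the middle step airtight. Two subtleties deserve care: confirming that contraction does not secretly erase any target edge $e_i$ (it cannot, since target edges live on two disjoint spanning trees and contracting one tree edge in $T_a$ cannot produce a parallel copy of a $T_b$ edge or of a later $T_a$ edge), and handling boundary cases ($c=1$, $|V_a|=1$, or $|V_b|=1$) where some spanning tree is empty; in these cases one simply takes all target edges from the nonempty side, and the same argument goes through. By committing up front to a fixed target sequence rather than reasoning about all internal-contraction histories, one avoids a combinatorial explosion and reduces the proof to a finite product of positive rationals.
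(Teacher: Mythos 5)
Your proof is correct and follows essentially the same strategy as the paper's: both hinge on the observation that, with positive probability, no edge crossing the $V_a$--$V_b$ cut is ever contracted, so the split survives into the enumeration on $G_r$ and is then retained by the final subsample. Your version is somewhat more explicit than the paper's (which merely notes that a non-cut edge exists at each step whenever $|V_{G_i}|>2$), since you commit to a concrete target sequence of spanning-tree edges and verify it is realizable, but the underlying decomposition and key idea are identical.
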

\begin{proof}
Since $s$ is an allowable split, both $V_a$ and $V_b$ are connected in $G$.  Let $E^{\prime}$ be the set of edges in $G$ that have endpoints in both $V_a$ and $V_b$.  Let $G_i$ be as in Algorithm~\ref{alg1}.  If $|V_{G_i}| > 2$ then there exist edges in $E_{G_i} \setminus E^{\prime}$ which are chosen w.p.p.  Therefore, w.p.p. we never choose an edge in $E^{\prime}$ to be contracted.  So $s \in S^{\prime}$ (and therefore in $S$) w.p.p.
\end{proof}

\subsection{Spanning Trees}
 The second method utilizes the concept of a spanning tree.  To review, a tree is a connected graph which contains no cycles and a spanning tree $T$ of a graph $G$ is a subgraph of $G$ such that $V_T = V_G$ and $T$ is a tree.  Spanning trees are widely studied in graph theory and have many applications in network theory.  Particularly relevant for this application is the existence of an efficient algorithm for choosing a random spanning tree {\em uniformly} from the space of all spanning trees due to Wilson~\cite{wilson1996generating}.  
 
One property of trees is that the removal of any edge of a tree $T$ disconnects the graph into exactly two connected components $T_1$ and $T_2$.  Since these sets are connected in $T$ they are also connected in $G$.  Thus, the spanning tree $T$ specifies $m-1$ allowable splits of $G$, one for each edge in $T$.  We could choose a single edge at random, or evaluate all splits corresponding to edges in $T$.  The latter confers some computational efficiencies due to the tree structure, so that is the approach we take here.  The parameter $n$ represents the number of spanning trees we sample.  The details are given in Algorithm~\ref{alg2}.

\begin{algorithm}
\caption{Split Sampling via Spanning Trees}\label{alg2}
\begin{algorithmic}[0]
\State {\bf Input:} A categorical variable $X$ with an associated graph $G = (V_{G},E_{G})$).
\State {\bf Parameters:} $n$
\State {\bf Output:} A set $\mathcal{S}$ of allowable splits
\State $S = \emptyset$
\For{$i$ in $1,2,\ldots, n$:} 
\State Generate a spanning tree $T$ on $G$ using Wilson's algorithm
\For{each edge $e$ in $T$}
\State Let $(V_a, V_b)$ be the two connected sets of vertices that result from removing edge $e$ from $T$.
\State $S \Leftarrow S \cup \{(V_a, V_b)\}$
\EndFor
\EndFor
\State Return $S$
\end{algorithmic}
\end{algorithm}

For a graph $G=(V,E)$, Wilson's algorithm runs in time $O(|E|^{4/3})$~\cite{DBLP:journals/corr/abs-1711-06455}.  We run this step $n$ times, and for each step, visit each of the $|V|-1$ edges in the spanning tree to generate a split.  Therefore the complexity of this sampling scheme is $O(n\cdot|V||E|^{4/3})$

\begin{claim}
If $s = (V_a, V_b)$ is an allowable split in $G$, then with positive probability (w.p.p.), Algorithm~\ref{alg2} will output a set $S$ such that $s \in S$.
\end{claim}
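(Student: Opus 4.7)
The plan is to exhibit a specific spanning tree $T^*$ of $G$ from which removing a single edge yields exactly the split $s = (V_a, V_b)$, then invoke the uniformity of Wilson's algorithm to conclude that $T^*$ is sampled with positive probability.

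First, since $s$ is an allowable split, both $V_a$ and $V_b$ are connected in $G$, so the induced subgraphs $G[V_a]$ and $G[V_b]$ each admit a spanning tree; call them $T_a$ and $T_b$. Second, because $G$ is connected (an implicit hypothesis, as otherwise spanning trees would not exist and the notion of allowable split would have to be refined component-wise), there must exist at least one edge $e^* \in E_G$ with one endpoint in $V_a$ and the other in $V_b$. Let $T^* = T_a \cup T_b \cup \{e^*\}$. A direct count shows $T^*$ has $(|V_a|-1) + (|V_b|-1) + 1 = |V_G|-1$ edges, and it is straightforwardly connected (any vertex can reach $e^*$ through $T_a$ or $T_b$) and acyclic (a cycle would have to use $e^*$ twice, or lie entirely in $T_a$ or $T_b$). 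Hence $T^*$ is a spanning tree of $G$.

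Next, removing $e^*$ from $T^*$ leaves exactly the two components $T_a$ and $T_b$, whose vertex sets are $V_a$ and $V_b$. So the split produced by the inner loop of Algorithm~\ref{alg2} when it processes edge $e^*$ of $T^*$ is precisely $s$. Since Wilson's algorithm samples spanning trees uniformly from the (finite, nonempty) set of spanning trees of $G$, each iteration of the outer loop produces $T^*$ with probability $1/|\mathcal{T}(G)| > 0$. Therefore, in any single iteration, $s$ is added to $S$ with positive probability, and in particular this holds for $n \geq 1$ iterations.

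There is no real obstacle here beyond bookkeeping; the only subtle point is checking that $e^*$ exists, which requires $G$ to be connected (and, more generally, $V_a$ and $V_b$ to lie in the same component). Once that is in hand, the construction $T^* = T_a \cup T_b \cup \{e^*\}$ and Wilson's uniformity immediately give the result.
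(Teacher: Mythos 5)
Your proposal is correct and follows essentially the same route as the paper's proof: construct spanning trees $T_a$, $T_b$ of the induced subgraphs, join them with a crossing edge $e^*$ to obtain a spanning tree $T^*$ of $G$, and invoke the uniformity of Wilson's algorithm to conclude $T^*$ (and hence the split $s$) occurs with positive probability. You simply spell out the edge count and the acyclicity/connectivity check that the paper leaves as ``straightforward.''
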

\begin{proof}
Since $s$ is an allowable split, both $V_a$ and $V_b$ are connected in $G$. Let $e^{\prime}$ be an edge in $G$ with one endpoint in $V_a$ and the other in $V_b$.  Such edge is guaranteed to exist by the connectivity of $G$. Let $G_a, G_b$ be the subgraphs of $G$ induced by $V_a, V_b$ (respectively) and let $T_a$, $T_b$ be spanning trees over $G_a, G_b$ (respectively).  Let $T^{\prime}$ be formed by adding $e^{\prime}$ to $T_a \cup T_b$.  It is straightforward to show that $T^{\prime}$ is a spanning tree over $G$.  Since $T^{\prime}$ is chosen w.p.p. by Wilson's algorithm, and removing $e^{\prime}$ results in the split $(V_a, V_b)$, w.p.p Algorithm~\ref{alg2} will return a set $S$ that contains $s$.
\end{proof}

\section{Experiments}
\label{experiments}
We perform experiments for two different prediction problems that contain categorical variables with non-ordinal structure.  The first problem involves predicting the probability of precipitation in different counties in the state of California.  The second involves predicting the probability of having an umbrella insurance policy using the US state of residence, in addition to other variables.

\subsection{Experimental Framework}

\begin{figure}[b]
\centering
\includegraphics[scale=0.3]{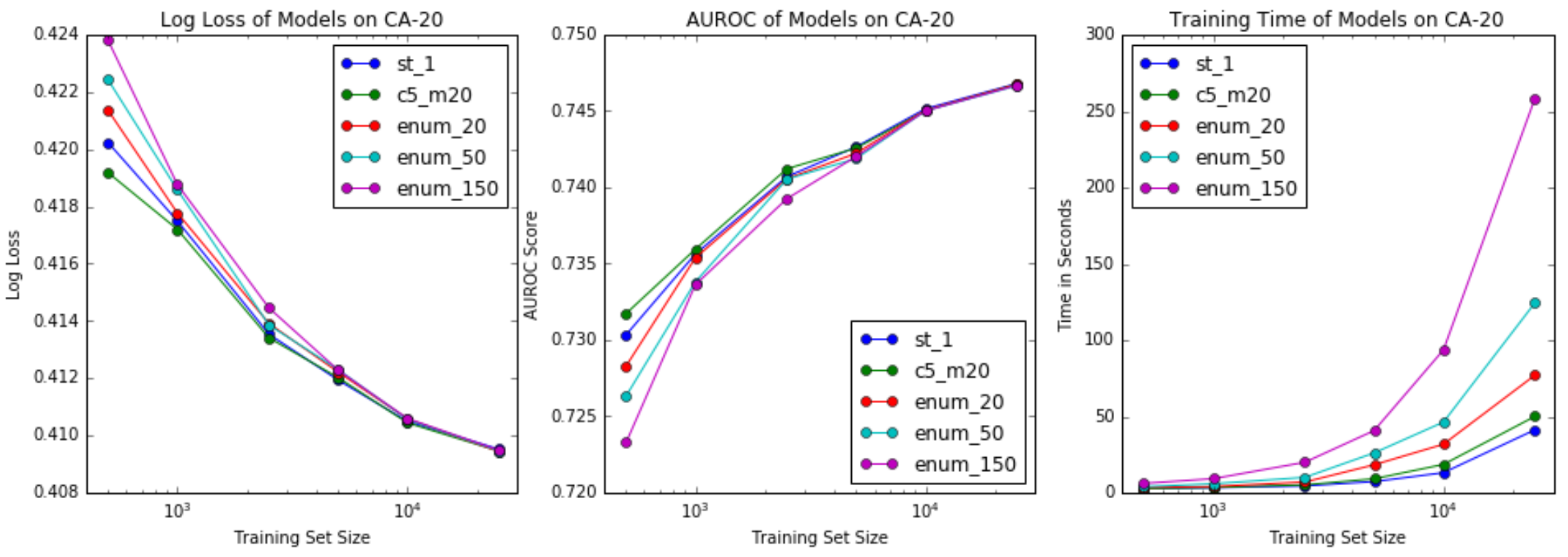}
\caption{Edge Contraction and Spanning Tree variants vs full enumeration on CA-20 dataset}
\label{ca-20-results}
\end{figure}

To fairly compare the models, in each of the experiments we take the following approach:

\begin{enumerate}
\item{Perform at least 5 trials of randomly splitting data into train, test, and validation components.}
\item{Choose training sets of different sizes from the train component.}
\item{Train all the contending models on training data with a low learning rate (.02), using the validation set to determine early stopping.}
\item{Try a wide range of {\em max\_depth} values (confirming by inspection that it was wide enough).}
\item{ For each particular training set size, choose the {\em max\_depth} value that yields the best performance averaged across all trials to represent the model.}
\item{Report the average training time for the relevant {\em max\_depth} value averaged across all trials.}
\item{Perform a paired t-test across the trials to assess statistical significance.}
\end{enumerate}

We use log-loss as our primary metric, but also report results on the area under the ROC curve (AUROC).  All experiments were performed on a 2019 MacBook Pro with a 2.4GHz Intel i9 processor.

\subsection{CA Weather Data}
The first set of experiments involve NOAA weather data set used in~\cite{categstruct}.  Each row corresponds to the weather on a particular day at a particular weather station, and we observe the month, county, and whether or not it rained on that day.   The goal is to accurately predict the probability of rain given the county and the month.  Since geographical proximity should imply similar precipitation patterns, there is good reason to expect that methods that can exploit this knowledge should outperform those that cannot, particularly in data-sparse scenarios.

\begin{figure}[t]
\centering
\includegraphics[scale=0.3]{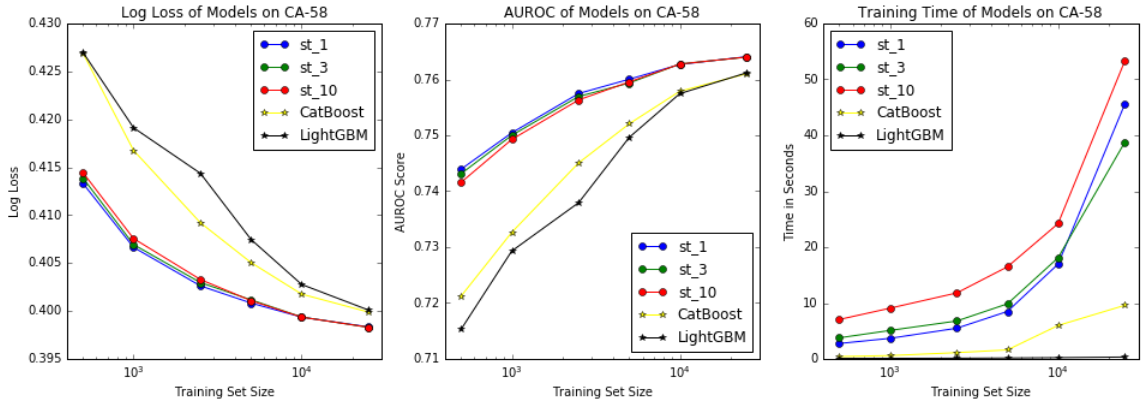}
\caption{Spanning Tree variants vs. CatBoost and LightGBM on CA-58}
\label{ca-58-results-st}
\end{figure}

\begin{figure}[b]
\centering
\includegraphics[scale=0.3]{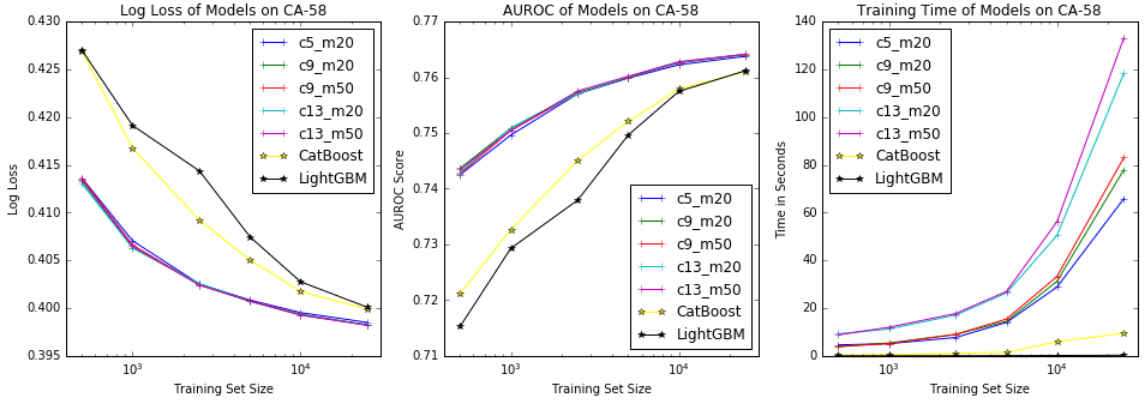}
\caption{Edge contraction variants vs. CatBoost and LightGBM on CA-58}
\label{ca-58-results-ec}
\end{figure}

\subsubsection{CA-20 Dataset}
 We begin by restricting ourselves to the 20-county subset explored in~\cite{categstruct}. Our goal is to compare the fully enumerated approaches from~\cite{categstruct} with the edge contraction and spanning tree approaches to split sampling described above.   We had a total of 140,135 data points, and so used validation and test sets of size 25,000 and training sets with sizes ranging from 500 to 25,000.   We compare the fully enumerated versions evaluating 20, 50, and 150 splits to two StructureBoost variants: one using the spanning tree approach with a single spanning tree (denoted ``st\_1") and the other using the edge contraction variant with contraction size $c$ equal to 5 and  the maximum number of splits to search $m$ equal to 20.  The results are in Figure~\ref{ca-20-results}.  As was shown in~\cite{categstruct}, we see that for the fully enumerated variants, evaluating just 20 splits outperforms evaluating 50 or 150 splits.  However, we also see that the two StructureBoost variants outperform the {\em enum\_20}.  This is quite interesting, as both of those variants evaluate about the same number of splits.  This suggests that choosing uniformly from the space of all splits may not, in fact, be the best approach.  Rather, the StructureBoost split sampling methods may serve to regularize further by altering the sampling distribution of allowable splits.

We also see in Figure~\ref{ca-20-results} that the StructureBoost variants are considerably quicker, which the spanning tree variant considerably faster than the edge contraction approach.  However, in this problem, the edge contraction variant performed slightly better, particularly for smaller training sets.

\subsubsection{CA-58 Dataset}
Next, we ran a similar experiment using the entire 58 counties of California.  Performing full enumeration on this graph is computationally prohibitive. Instead, we compare multiple variants of StructureBoost to CatBoost and LightGBM.  Figure~\ref{ca-58-results-st} compares the results of 3 spanning tree variants of StructureBoost to CatBoost and LightGBM while Figure~\ref{ca-58-results-ec} does the same for the edge-contraction variants of StructureBoost.  Notably, all variants of StructureBoost considerably outperform the unstructured alternatives, with relatively little performance variation amongst the StructureBoost variants.  The discrepancy is particularly large for smaller training set sizes -- a performance boost of more than .02 on AUC, merely by incorporating structure.  We ran paired t-tests between each StructureBoost variant and each of CatBoost and LightGBM to assess statistical significance at each training set size.  All comparisons were significant with a p-value of less than .01.

However, CatBoost and LightGBM are much faster than StructureBoost.  To some degree this reflect the necessary complexity required to deal with the categorical structure.  However, some of the discrepancy may also be due to better engineering and code optimization of the more mature boosting packages.  Amongst the StructureBoost variants, the spanning tree approach was considerably quicker than edge contraction.  On training sets of size 25000, none of the methods took more than a few minutes to train, and the spanning tree approaches generally took less than a minute.  Since the StructureBoost variant with a single spanning tree was fastest while still yielding excellent predictive power, we use that variant as the default StructureBoost for the experiments that follow.

\subsubsection{Predicting on Unseen Categorical Values}
A common problem in making predictions using categorical variables is how to make a prediction when a predictor has a value that is not present in the training set.  Unseen numerical values are common, and usually present no problem --  such interpolation (or extrapolation) is the essence of most supervised learning techniques.  But without a representation of the structure of a categorical variable, there can be no notion of interpolation on categorical values.  However, categorical structure should enable methods like StructureBoost to effectively do interpolation on categorical variables.

To explore this notion further, we performed the following experiment, again using the CA-58 dataset.  We omitted 3 counties (Riverside, Mariposa, and Trinity) from the training data and chose training sets from the remaining counties.  Validation sets of size 25,000 were also chosen from the remaining counties, but test sets of size 5,000 were chosen only from omitted counties.  The goal was to see how well the various models could do when predicting on counties for which they had no training data.  The results are in Figure~\ref{fig:ca-58-results-rem}.  We see that StructureBoost markedly outperforms both CatBoost and LightGBM. The paired t-test gave p-values of less than .01 for all training set sizes for StructureBoost against the alternatives.

\begin{figure}[t]
\begin{subfigure}{.33\textwidth}
  \centering
  \includegraphics[width=.8\linewidth]{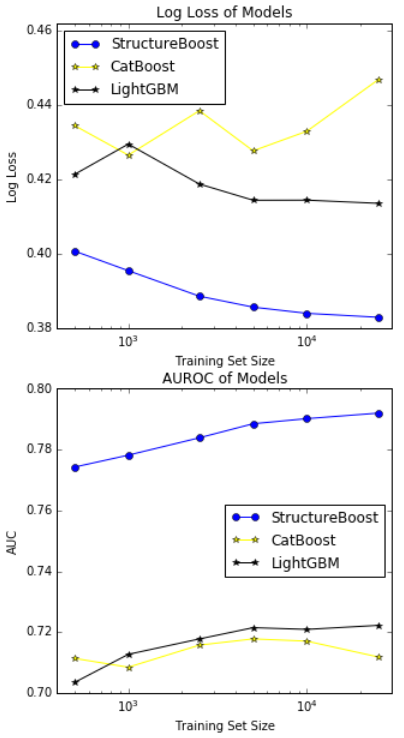}  
  \caption{Performance of models }
  \label{fig:ca-58-results-rem}
\end{subfigure}
\begin{subfigure}{.6\textwidth}
  \centering
  \includegraphics[width=.8\linewidth]{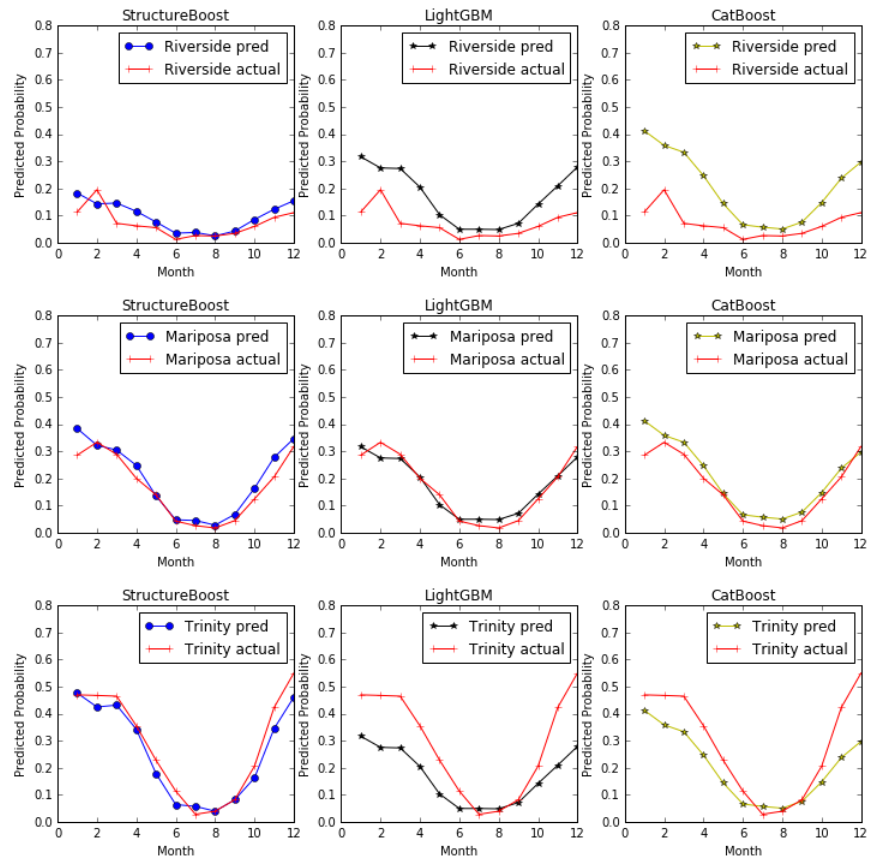}  
  \caption{Predicted vs Actual probabilities}
  \label{fig:prob-compare}
\end{subfigure}
\caption{Results when trained on 55 counties and tested on other 3}
\label{fig:fig}
\end{figure}

It is instructive to compare the predicted probabilities of the 3 models for the 3 omitted counties and compare them to the ``actuals" (computed using all the data).  This is shown in Figure~\ref{fig:prob-compare}, for a training set of size 2,500.  In the leftmost column, we see that the monthly probabilities predicted by StructureBoost (blue curves) closely approximate the actual probabilities (red curves) in each of the three omitted counties.  However, in the following two columns, both LightGBM (black curves) and CatBoost (yellow curves) make the same predictions for all three counties.  Consequently, they over-predict the probability of rain in Riverside county and under-predict in Trinity county.

\subsection{Umbrella Insurance Prediction}
Umbrella insurance is a type of additional liability insurance that some people carry in addition to standard auto and home insurance policies.  It is typically carried by older and wealthier people who have substantial assets to protect.  Insurance in the U.S. is regulated at the state level, therefore costs and coverages of policies can vary substantially from state to state.  Unlike the weather example, however, it is not obvious that adjacency of states would imply similarity in tendency to buy umbrella insurance.  However, the state of residence may be a proxy for demographics, attitudes toward risk, and aspects of regulation that may be more similar in neighboring states. 

\begin{figure}[t]
\centering
\includegraphics[scale=0.28]{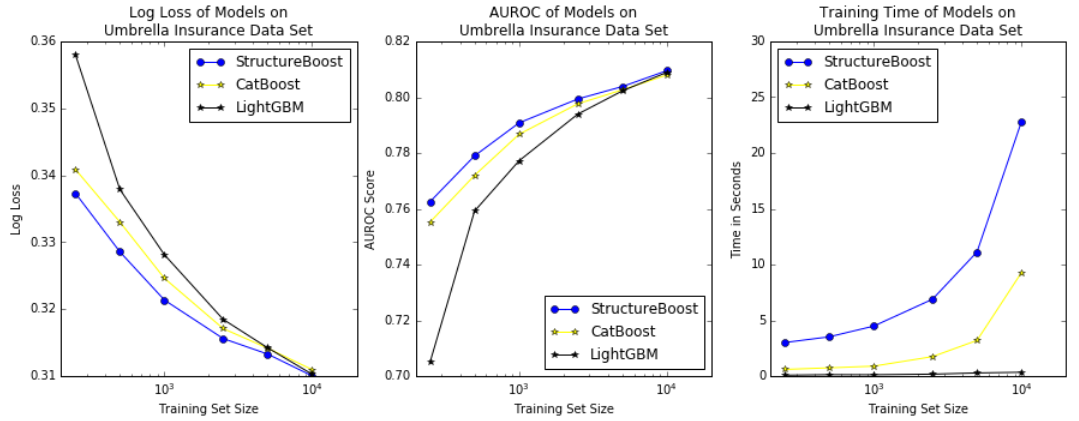}
\caption{Comparison of different models on Umbrella prediction.}
\label{umbrella-results}
\end{figure}

We obtained a dataset of 24,742 different insurance policyholders that had both home and auto insurance in the lower 48 US States.  We obtained their age, two different home insurance coverage limits (one for the dwelling, one for personal property), their state of residence, and whether or not they have an umbrella policy.  We compared StructureBoost (with a single spanning tree) to CatBoost and LightGBM to see how well the different models could predict who would have an umbrella policy from these variables.  We ran 25 trials, with a test set of size 10,000, a validation set of size 4,742, and training set sizes ranging from 250 to 10,000. The results are in Figure~\ref{umbrella-results}.  Again, we see that StructureBoost outperforms the unstructured alternatives.  While the discrepancies were not as large as in the previous examples - this is to be expected.  As noted earlier, the value of structure in this problem is less apparent than in the previous ones.  Nevertheless, paired t-tests indicate significance with p-values less than .01 for all comparisons, except for the comparison with LightGBM at size 10,000 which had a p-value of .115.

\section{Summary and Conclusions}
We demonstrate that Gradient Boosting can be done efficiently and effectively with complex structured categorical variables, and that exploiting this structure yields superior predictive performance.  We also shed some light on {\em why} structure is important --  since it enables accurate prediction on unseen categorical values, it certainly must aid performance in realms of data sparsity. 

While the structured categorical variables used in this paper referred to spatial and geographical considerations, the ability to encode graphical structure in a predictor is not limited to these situations.  For example, edit distance can provide a graph structure on strings as suggested in~\cite{bilisoly2014generalizing}.  This approach may be useful for text analysis and bioinformatics problems.   The StructureBoost package in Python is open-source and freely available as is the specific code and data used for the experiments.

There are several directions of future research that stem from the notion on categorical structure and StructureBoost specifically.  The first involves improving predictive performance in a wide variety of domains where categorical variables are being used as predictors.  A second direction involves a ``data-driven" approach to edge contraction -- can we choose the edges in a way to consolidate ``data-poor" regions together to strengthen the signal?  Next, can we apply the notion of categorical structure to the target variable to improve performance in multi-class classification problems?  For example, predicting that an image is a chimpanzee when it is actually a monkey is less ``wrong" than predicting either of them is a car.  However, to make that kind of judgement, performance metrics must take into account the structure of the target variable.  Finally, we believe there are interesting questions around the mathematical foundations of categorical variables that can be explored.

\section*{Broader Impact}

The contributions in this paper, and the StructureBoost package in general, provide the capability to better utilize categorical variables to make predictions.  Like any statistical technique, when used properly, it could enable discoveries, tools, or products that help people.  Conversely, if used improperly, it could have a negative impact.  One particular avenue is worth mentioning.  Race, ethnicity, and other demographic variables are categorical, and we could imagine this method being used on such variables.  If used properly, this could be one way to reduce disparities in predictive performance across classes that have occurred in various scenarios.  However, one could also imagine situations where techniques like this are used to justify unwarranted conclusions, or otherwise penalize members of disadvantaged groups.  There are a priori assumptions that one is making when one attributes a graphical structure to the values of a variable.  These could easily be mis-used or mis-construed.

\bibliography{stb_neurips_2020}

\begin{thebibliography}{15}
\providecommand{\natexlab}[1]{#1}
\providecommand{\url}[1]{\texttt{#1}}
\expandafter\ifx\csname urlstyle\endcsname\relax
  \providecommand{\doi}[1]{doi: #1}\else
  \providecommand{\doi}{doi: \begingroup \urlstyle{rm}\Url}\fi

\bibitem[Bilisoly(2014)]{bilisoly2014generalizing}
R.~Bilisoly.
\newblock Generalizing the mean and variance to categorical data using metrics.
\newblock \emph{arXiv preprint arXiv:1410.1106}, 2014.

\bibitem[Chen and Guestrin(2016)]{chen2016xgboost}
T.~Chen and C.~Guestrin.
\newblock Xgboost: A scalable tree boosting system.
\newblock In \emph{Proceedings of the 22nd acm sigkdd international conference
  on knowledge discovery and data mining}, pages 785--794. ACM, 2016.

\bibitem[Dorogush et~al.(2018)Dorogush, Ershov, and
  Gulin]{dorogush2018catboost}
A.~V. Dorogush, V.~Ershov, and A.~Gulin.
\newblock Catboost: gradient boosting with categorical features support.
\newblock \emph{arXiv preprint arXiv:1810.11363}, 2018.

\bibitem[Elbassioni(2015)]{elbassioni2015polynomial}
K.~M. Elbassioni.
\newblock A polynomial delay algorithm for generating connected induced
  subgraphs of a given cardinality.
\newblock \emph{J. Graph Algorithms Appl.}, 19\penalty0 (1):\penalty0 273--280,
  2015.

\bibitem[Friedman(2000)]{Friedman00greedyfunction}
J.~H. Friedman.
\newblock Greedy function approximation: A gradient boosting machine.
\newblock \emph{Annals of Statistics}, 29:\penalty0 1189--1232, 2000.

\bibitem[Friedman(2002)]{Friedman:2002:SGB:635939.635941}
J.~H. Friedman.
\newblock Stochastic gradient boosting.
\newblock \emph{Comput. Stat. Data Anal.}, 38\penalty0 (4):\penalty0 367--378,
  Feb. 2002.
\newblock ISSN 0167-9473.
\newblock \doi{10.1016/S0167-9473(01)00065-2}.
\newblock URL \url{http://dx.doi.org/10.1016/S0167-9473(01)00065-2}.

\bibitem[Ke et~al.(2017)Ke, Meng, Finley, Wang, Chen, Ma, Ye, and
  Liu]{ke2017lightgbm}
G.~Ke, Q.~Meng, T.~Finley, T.~Wang, W.~Chen, W.~Ma, Q.~Ye, and T.-Y. Liu.
\newblock Lightgbm: A highly efficient gradient boosting decision tree.
\newblock In \emph{Advances in Neural Information Processing Systems}, pages
  3146--3154, 2017.

\bibitem[Komusiewicz and Sommer(2019)]{komusiewicz2019enumerating}
C.~Komusiewicz and F.~Sommer.
\newblock Enumerating connected induced subgraphs: Improved delay and
  experimental comparison.
\newblock In \emph{International Conference on Current Trends in Theory and
  Practice of Informatics}, pages 272--284. Springer, 2019.

\bibitem[Lucena(2020)]{categstruct}
B.~Lucena.
\newblock Exploiting categorical structure using tree-based methods.
\newblock In \emph{Proceedings of the 23rd International Conference on
  Artificial Intelligence and Statistics, AISTATS}, volume 108 of
  \emph{Proceedings of Machine Learning Research}, pages 2949--2958, 2020.

\bibitem[Prokhorenkova et~al.(2018)Prokhorenkova, Gusev, Vorobev, Dorogush, and
  Gulin]{NIPS2018_7898}
L.~Prokhorenkova, G.~Gusev, A.~Vorobev, A.~V. Dorogush, and A.~Gulin.
\newblock Catboost: unbiased boosting with categorical features.
\newblock In S.~Bengio, H.~Wallach, H.~Larochelle, K.~Grauman, N.~Cesa-Bianchi,
  and R.~Garnett, editors, \emph{Advances in Neural Information Processing
  Systems 31}, pages 6638--6648. Curran Associates, Inc., 2018.
\newblock URL
  \url{http://papers.nips.cc/paper/7898-catboost-unbiased-boosting-with-categorical-features.pdf}.

\bibitem[Robertson and Seymour(1986)]{robertson1986graph}
N.~Robertson and P.~D. Seymour.
\newblock Graph minors. v. excluding a planar graph.
\newblock \emph{Journal of Combinatorial Theory, Series B}, 41\penalty0
  (1):\penalty0 92--114, 1986.

\bibitem[Robertson and Seymour(1990)]{robertson1990graph}
N.~Robertson and P.~D. Seymour.
\newblock Graph minors. iv. tree-width and well-quasi-ordering.
\newblock \emph{Journal of Combinatorial Theory, Series B}, 48\penalty0
  (2):\penalty0 227--254, 1990.

\bibitem[Schild(2017)]{DBLP:journals/corr/abs-1711-06455}
A.~Schild.
\newblock An almost-linear time algorithm for uniform random spanning tree
  generation.
\newblock \emph{CoRR}, abs/1711.06455, 2017.
\newblock URL \url{http://arxiv.org/abs/1711.06455}.

\bibitem[Weisstein(2019)]{USgraph}
E.~W. Weisstein.
\newblock Contiguous usa graph. from mathworld--a wolfram web resource., 2019.
\newblock URL \url{http://mathworld.wolfram.com/ContiguousUSAGraph.html}.

\bibitem[Wilson(1996)]{wilson1996generating}
D.~B. Wilson.
\newblock Generating random spanning trees more quickly than the cover time.
\newblock In \emph{Proceedings of the twenty-eighth annual ACM symposium on
  Theory of computing}, pages 296--303, 1996.

\end{thebibliography}

\end{document}